\documentclass[11pt]{article}

\usepackage[preprint]{acl}

\usepackage{amsfonts}
\usepackage{amsmath}
\usepackage{amssymb}
\usepackage{amsthm}
\usepackage{times}
\usepackage{latexsym}
\usepackage[inline]{enumitem}

\usepackage[T1]{fontenc}

\usepackage[utf8]{inputenc}

\usepackage{microtype}

\usepackage{inconsolata}

\usepackage{graphicx}

\usepackage{todonotes}
\makeatletter
\newcommand*\iftodonotes{\if@todonotes@disabled\expandafter\@secondoftwo\else\expandafter\@firstoftwo\fi}  %
\makeatother

\usepackage{ellipsis}
\makeatletter
\renewcommand*{\mathellipsis}{%
  \mathinner{%
    \kern\ellipsisbeforegap%
    {\ldotp}\kern\ellipsisgap%
    {\ldotp}\kern\ellipsisgap%
    {\ldotp}\kern\ellipsisaftergap%
  }%
}
\renewcommand*{\dotsb@}{%
  \mathinner{%
    \kern\ellipsisbeforegap%
    {\cdotp}\kern\ellipsisgap%
    {\cdotp}\kern\ellipsisgap%
    {\cdotp}\kern\ellipsisaftergap%
  }%
}
\renewcommand*{\@cdots}{%
  \mathinner{%
    \kern\ellipsisbeforegap%
    {\cdotp}\kern\ellipsisgap%
    {\cdotp}\kern\ellipsisgap%
    {\cdotp}\kern\ellipsisaftergap%
  }%
}
\renewcommand*{\ellipsis@default}{%
  \ellipsis@before
  \kern\ellipsisbeforegap
  .\kern\ellipsisgap
  .\kern\ellipsisgap
  .\kern\ellipsisgap
  \ellipsis@after\relax}
\renewcommand*{\ellipsis@centered}{%
  \ellipsis@before
  \kern\ellipsisbeforegap
  .\kern\ellipsisgap
  .\kern\ellipsisgap
  .\kern\ellipsisaftergap
  \ellipsis@after\relax}
\AtBeginDocument{%
  \DeclareRobustCommand*{\dots}{%
    \ifmmode\@xp\mdots@\else\@xp\textellipsis\fi}}
\def\ellipsisgap{-.05em}
\def\ellipsisbeforegap{-.05em}
\def\ellipsisaftergap{.05em}
\makeatother

\newtheorem{theorem}{Theorem}

\newtheorem{definition}{Definition}

\usepackage{thmtools}
\usepackage{thm-restate}
\usepackage{etoc}
\usepackage{cleveref}
\crefname{proposition}{Proposition}{}
\Crefname{proposition}{Proposition}{}
\crefname{section}{\S}{\S\S}
\Crefname{section}{\S}{\S\S}
\crefname{table}{Table}{}
\crefname{figure}{Figure}{}
\crefname{algorithm}{Algorithm}{}
\crefname{equation}{Eq.}{}
\Crefname{equation}{Eq.}{}
\crefname{appendix}{App.}{}
\crefformat{section}{\S#2#1#3}

\newcommand{\defn}[1]{\textbf{#1}}
\renewcommand{\citeposs}[1]{\citeauthor{#1}'s (\citeyear{#1})}
\newcommand{\defeq}{\mathrel{\stackrel{\textnormal{\tiny def}}{=}}}

\newcommand{\alphabet}{{\Sigma}}
\newcommand{\eosalphabet}{{\alphabet_{\textsc{e}}}}
\newcommand{\kleene}[1]{{#1^*}}

\newcommand{\str}{{\boldsymbol{y}}}

\newcommand{\strlen}{{T}}
\newcommand{\eos}{{\textnormal{\textsc{eos}}}}

\newcommand{\pLM}{{q}}
\newcommand{\pGen}{{p}}
\newcommand{\qLM}{{q}}
\newcommand{\pPrefix}{{\pi}}
\newcommand{\entropy}{{\mathrm{H}}}

\newcommand{\KL}[2]{D_{\mathrm{KL}}\!\left(#1\,\|\,#2\right)}
\newcommand{\unit}{{u}}
\newcommand{\units}{{\boldsymbol{u}}}
\newcommand{\difficulty}{{\mathrm{d}}}
\newcommand{\pHuman}{{p_{\mathrm{H}}}}
\newcommand{\family}{{\mathcal{Q}}}

\newcommand{\pData}{{p_{\mathrm{C}}}}
\newcommand{\memory}{{m}}
\newcommand{\metric}{{M}}
\newcommand{\reals}{{\mathbb{R}}}
\newcommand{\scoring}{{g}}
\newcommand{\prefix}[1]{\vec{#1}} 
\newcommand{\qdiff}{{\pLM_{\difficulty}}}
\newcommand{\qdiffprefix}{{\prefix{\pLM}_{\difficulty}}} 
\newcommand{\pDataprefix}{{\prefix{\pGen}_{\mathrm{C}}}} 

\title{Surprisal Theory is Tautological \\
(without Rational Grounding)}

\author{Ryan Cotterell \\
  ETH Z{\"u}rich \\
  \href{mailto:ryan.cotterell@inf.ethz.ch}{\texttt{ryan.cotterell@inf.ethz.ch}}}

\begin{document}
\maketitle
\etocdepthtag.toc{mainmatter} 
\begin{abstract}
Surprisal theory holds that the human processing difficulty of a linguistic unit in context is an affine function of its surprisal under some language model.
I argue this claim is a tautology without further constraint: for any non-negative difficulty measure over units in context, there exists a language model whose surprisal is an affine function of it under mild technical conditions.
Therefore, because any pattern of difficulty is consistent with \emph{some} language model, without an additional constraint on the language model, surprisal theory makes no falsifiable predictions.
The tautology was long obscured by an assumption implicit in two decades of psycholinguistic work---that the relevant language model is the distribution that generated the training corpus, so that improving corpus fit improves predictions of human behavior.
Recent empirical work has undermined this assumption, demonstrating that better corpus models can be \emph{worse} predictors of processing difficulty.
I conclude that breaking the tautology requires a rationalist intervention, i.e., the relevant language model must be derived from a non-empirically motivated model of the comprehender, which could be based on, for instance, memory constraints or processing goals, and that, thus, does not depend on the behavioral data surprisal theory is meant to explain.\looseness=-1
\end{abstract}

\section{Introduction\texorpdfstring{\protect\footnotemark}{}}\label{sec:intro}
\footnotetext{\textcolor{teal}{This draft was completed in March 2026; I am posting it to arXiv now owing to the appearance of the seemingly similar, independently conducted work of \citet{buxolugo2026}.
Future versions will compare the two papers more closely.}}

A foundational observation in psycholinguistics is that human language comprehension is \emph{incremental}, i.e., comprehenders do not wait for a complete utterance before beginning to interpret it, but instead build and update their representations unit by unit as the input unfolds \citep{altmann1999,tanenhaus1995}.\footnote{Incrementality is not absolute: eye-tracking reveals regressive saccades in which readers revisit earlier material \citep{wilcox2024regressions}, and late measures such as go-past time aggregate processing across multiple fixations \citep{boston2008,demberg2008}.
Nonetheless, the dominant empirical pattern is strongly left-to-right, and the standard modeling assumption in the surprisal literature is that difficulty at position $t$ depends on $\units_{<t}$ alone.
I adopt this assumption throughout; extending the framework to account for regressions is an open problem.\looseness=-1}
This fact has deep consequences for how one theorizes about processing.
If comprehension proceeds incrementally, then the total effort of understanding an utterance $\units = \unit_1 \cdots \unit_{\strlen}$ naturally decomposes into a series of per-step costs, one for each unit integrated into the evolving representation.
Incrementality thus motivates the central theoretical primitive of this paper: the \defn{processing difficulty} $\difficulty(\unit_t \mid \units_{<t}) \geq 0$, a non-negative function that quantifies the effort a comprehender expends to integrate unit $\unit_t$ given the preceding context $\units_{<t} \defeq \unit_1 \cdots \unit_{t-1}$;\footnote{The construct $\difficulty$ is agnostic about the source of the processing effort: it subsumes prediction disconfirmation, the cost of integrating $\unit_t$ into the evolving representation, and any reanalysis of earlier material that $\unit_t$ triggers.
Because reanalysis at position $t$ is initiated by observing $\unit_t$ in context $\units_{<t}$, its cost remains a function of the pair $(\unit_t, \units_{<t})$ and is therefore representable by $\difficulty$; indeed, under \citeposs{levy2008} resource-reallocation interpretation, reanalysis costs are part of what surprisal is meant to quantify.} the notation is formally introduced in \cref{sec:preliminaries}.\footnote{I treat $\difficulty$ as a function of the linguistic unit and its context alone, abstracting over reader identity, reader state, and presentation modality.
This follows the standard linear mixed-effects regression framework used in psycholinguistics \citep{smith2013,goodkind2018}, which decomposes observed reading times into a surprisal-driven component shared across readers, control predictors (word length, frequency, position), reader-specific random intercepts, and residual noise.
Under this decomposition, the string-specific difficulty captured by surprisal is the same for all readers; reader-specific variation is absorbed into the random effects.
If $\difficulty$ were instead reader-specific, the tautology construction in \Cref{eq:tautology-construction} would produce a different language model for each reader---but surprisal theory posits a single language model shared across readers, which is itself a testable constraint that the present argument does not exploit.\looseness=-1}
In practice, processing difficulty is operationalized through a variety of behavioral and neural measures---self-paced reading times, eye-tracking fixation durations, and event-related potentials, \textit{inter alia}---and a successful theory of sentence processing must explain its variation across positions and contexts.
However, for the exposition of this paper, I abstract away from specific operationalizations with the construct $\difficulty$.\looseness=-1

\subsection{Surprisal Theory}
Surprisal theory \citep{hale2001,levy2008} is a widely adopted information-theoretic framework for prediction-based processing \citep{kuperberg2016} in psycholinguistics that offers a particularly elegant account of incremental processing.
It claims that processing difficulty is an affine function of the contextual surprisal of the unit under some language model $\pLM$ \citep{shain2024};\footnote{\citet{shain2024} provide large-scale evidence that the relationship between word predictability and reading time is linear in log-probability---i.e., that the affine form in \Cref{eq:surprisal-intro} is a better fit than alternatives such as a logarithmic or power-law relationship in raw probability.
The affine formulation is important for my tautology argument: the construction in \Cref{eq:tautology-construction} yields $a = 1$ and a context-dependent intercept $b = -\log Z(\units_{<t})$, which is exactly the affine form.
A strictly proportional formulation ($b = 0$) would leave a gap, since the partition function varies with context.\looseness=-1} in symbols, surprisal theory corresponds to
\begin{equation}\label{eq:surprisal-intro}
    \difficulty(\unit_t \mid \units_{<t}) = -a\log \prefix{\pLM}(\unit_t \mid \units_{<t}) + b(\units_{<t}),
\end{equation}
where $a > 0$ is a constant and $b \colon \kleene{\alphabet} \to \reals$ is a context-dependent baseline.
This paper argues that, in its current conceptualization, surprisal theory is not falsifiable \citep{popper1959} without constraints that go beyond current practice and, moreover, tautological.
Because the theory leaves $\pLM$ as a free parameter, \emph{any} observed pattern of processing difficulty can be accommodated.

Specifically, for any difficulty measure $\difficulty$, one can simply construct a language model whose surprisal is an affine function of it:
\begin{equation}\label{eq:tautology-construction}
\!\qdiffprefix(\unit_t \mid \units_{<t}) \defeq \frac{\exp({-\difficulty(\unit_t \mid \units_{<t})})}{\sum_{\unit' \in \eosalphabet} \exp({-\difficulty(\unit' \mid \units_{<t})})}.
\end{equation}
Under this construction, $-\log \qdiffprefix(\unit_t \mid \units_{<t})= \difficulty(\unit_t \mid \units_{<t}) + \log Z(\units_{<t})$, where $Z(\units_{<t})$ is the partition function---so \Cref{eq:surprisal-intro} holds with $a = 1$ and $b(\units_{<t}) = -\log Z(\units_{<t})$.\footnote{In psycholinguistic practice, context-dependent baselines are absorbed into control predictors in the regression model.
The affine formulation with context-dependent $b$ is therefore the natural statement of the theory.}
Under mild conditions, explicated in \cref{sec:tautology-formal}, this construction yields a valid language model.\footnote{The tautology argument applies more broadly to any theory that defines processing difficulty purely as a function of the language model's probability distribution. For instance, \citeposs{hale2003} entropy reduction hypothesis---which links processing difficulty to the reduction in entropy over possible parses---faces a similar problem: for any observed pattern of difficulty, one can in principle construct a probability distribution whose entropy reductions match it, though the construction is less direct than in the surprisal case.}
The consequence is that, without further constraining $\pLM$, surprisal theory's claim that difficulty scales with surprisal is vacuous, i.e., it is compatible with every possible pattern of processing difficulty.\looseness=-1

Of course, proponents of surprisal theory have never claimed that just \emph{any} language model will do, and neither do I.
To escape the tautology, one must restrict attention to a family of language models $\family$---for example, the set of all probabilistic context-free grammars \citep[PCFGs;][]{booth1969}---and select a $\pLM \in \family$.
This is precisely the move \citet{hale2001} makes: invoking his Principle~1, which he terms strong competence---a hypothesis originally proposed by \citet{chomsky1965}---he restricts $\family$ to the set of PCFGs on rationalist grounds: he believes ``the human sentence processing mechanism directly uses rules of grammar in its operation.''\looseness=-1

The problem is that subsequent empirical work has progressively \emph{drifted} from such commitments---from PCFGs to recurrent neural networks \citep{hochreiter1997} to transformers \citep{vaswani2017}---adopting model families whose cognitive relevance is left unjustified, relying instead on the \defn{corpus assumption} (\cref{sec:intro-propensity})---the implicit premise that the relevant language model is the very distribution that generated the training corpus, so that fitting that distribution ever more faithfully can only improve predictions of processing difficulty---to do the constraining work that the choice of $\family$ no longer provided---leaving behind only a weaker, implicit assumption that I will call the \defn{cognitivist assumption}---that $\pLM$ should approximate a theoretical construct $\pHuman$, the comprehender's internal language model.
\citet{levy2008} comes closest to making the cognitivist assumption explicit, writing ``the cognitive entity of primary interest is the resulting probabilistic [unit] model alone'' (p.~1138), and that the choice of grammar ``is not a commitment to any particular grammatical formalism as the backbone of sentence comprehension'' but rather ``a formal means of estimating what expectations about upcoming [units] in a sentence implicitly arise'' (p.~1138).
Under the cognitivist assumption, surprisal theory reduces to the claim that\looseness=-1
\begin{equation}\label{eq:surprisal-cognitivist}
\!\difficulty(\unit_t \mid \units_{<t}) = -a\log \prefix{\pHuman}(\unit_t \mid \units_{<t}) + b(\units_{<t}).
\end{equation}
But this framing exposes that the cognitivist assumption, by itself, does not escape the tautology: without an independent characterization of $\pHuman$ that is \emph{not} grounded in the behavioral data that the field seeks to explain, any pattern of difficulty can be explained by positing a $\pHuman$ whose surprisal matches it.\footnote{The same structural problem afflicts any theory with an unconstrained latent construct.}\looseness=-1

I call the fact that the falsifiability of surprisal theory requires an externally chosen family of language models $\family$ the \defn{grounding problem}.
Notably, the grounding problem has two parts: first, what family $\family$ should be considered, and second, how should a specific $\pLM \in \family$ be selected?
The first question concerns the structural commitments of the family of language models---whether $\family$ comprises PCFGs, recurrent neural networks, or transformers---while the second concerns the criterion by which $\pLM$ is selected from within $\family$, e.g., is it fit to a corpus, derived from cognitive principles, or optimized against behavioral data?
Surprisal theory itself provides no independent answer to either question---the answer must come from outside the theory.
Yet in much of the recent empirical literature on surprisal theory, the choice of $\family$ receives little or no external justification: studies routinely report results for whichever architecture is currently in vogue, e.g., recurrent neural networks or transformers---without arguing that the chosen $\family$ is the right one on cognitive or other theoretical grounds.
The implicit assumption is that a better corpus model is a better cognitive model, but this assumption is precisely what is at issue.

\subsection{An Analogy from Evolutionary Biology}
\label{sec:intro-tautology}
Tautologies are not new in science.
An instructive parallel comes from evolutionary biology.
\citeposs{darwin1859} principle of natural selection---that heritable traits conferring reproductive advantage spread through a population---is distilled in the slogan \emph{survival of the fittest}.
But if fitness simply means reproductive success, the slogan reduces to ``those who reproduce most, reproduce most''---a tautology, not a law of nature.
The observation has a long history \citep{butler1882,sinnott1958,waddington1960} and was given its most sustained treatment by \citet{peters1976}; Popper famously concluded that natural selection was a ``metaphysical research programme'' rather than a testable theory \citep{popper1974}.
The criticism was not that evolution was false, but that its central explanatory construct was defined in terms of the very phenomenon it was meant to explain.\looseness=-1

The resolution came through what \citet{mills1979} called the
propensity interpretation of fitness; see  also \citet{brandon1978}.
On this view, fitness is not a measure of actual reproductive output but a \defn{propensity}---a causal tendency, grounded in an organism's morphology, physiology, and behavior in a given environment, to survive.
Under the propensity interpretation, natural selection makes a falsifiable claim: organisms with higher propensities reproduce more than those with lower ones.
The key structural feature of the propensity interpretation---and the one I will exploit below---is that it grounds an explanatory construct in physical or structural properties of the organism itself and how it relates to its environment.
Crucially, this grounding is \emph{independent} of the outcome the construct is meant to explain---reproductive success.
A propensity interpretation of surprisal theory would require an analogous move: grounding the choice of $\pLM$ in independently observable properties of the comprehender---working memory capacity, parsing strategy, attention allocation policy---rather than in the reading-time data the theory is meant to predict.
I discuss concrete proposals along these lines in \cref{sec:rationalist}.

\subsection{How Surprisal Theory Lost Its Propensity}
\label{sec:intro-propensity}
While rarely stated explicitly, I contend that surprisal theory \emph{did} historically have a propensity interpretation, which I explicate below.
Let $\pData$ denote the data-generating language model of the corpus---the distribution over strings that produced the texts used to estimate $\pLM$.
Just as fitness under the propensity interpretation is grounded in how the organism interacts with its environment---observable physical properties \emph{independent} of reproductive outcomes---$\pData$ is grounded in how humans produce text in the wild, observable independently of any processing difficulty measurements.\footnote{One might object that the independence is not absolute, i.e.the same cognitive system that produces text also processes it, so $\pData$ and processing difficulty are not causally unrelated.
The dependence may run deeper than mere causal correlation: influential theories of language processing posit shared representations for production and comprehension \citep{pickering2004,dell1986}, in which case $\pData$ is not independent of $\pHuman$ but rather a noisy-channel reflection of it.
If production and comprehension share the same underlying language model, the corpus assumption and the cognitivist assumption would collapse into one---an identification that, if correct, would simplify the grounding problem considerably.
The point, for present purposes, is that $\pData$ can be \emph{estimated} without recourse to reading-time data, which is sufficient for the propensity move to go through.
I concede, however, that the relationship between production and comprehension distributions deserves more careful treatment than I am able to give it here.\looseness=-1}

To state it more concisely, the assumption that has anchored two decades of empirical work in surprisal theory---call it the \defn{corpus assumption}---is that $\pLM = \pData$.
Under the corpus assumption, surprisal theory claims the following
\begin{equation}\label{eq:corpus-surprisal}
\! \difficulty(\unit_t \mid \units_{<t}) = -a\log \pDataprefix(\unit_t \mid \units_{<t}) + b(\units_{<t}).
\end{equation}
One wrinkle is that $\pData$ is not known.
However, it can be approximated by estimating a language model from a corpus of texts assumed to have been sampled from $\pData$.
The corpus assumption led to two concrete payoffs.
First, it made finding a $\pLM$ to experiment with in practice approachable: one can estimate a language model $\qLM$ from a corpus and treat it as an approximation to $\pData$.
Second, under maximum-likelihood estimation, the corpus assumption entails what I call the \defn{scaling implication}---as estimation improves, a model $\qLM$ should asymptotically yield surprisal values that better correlate with processing difficulty.
This implication is a direct consequence of the corpus assumption, as I formally argue in \cref{sec:tautology}, and forces surprisal theory to make a falsifiable prediction.

In practice, however, the corpus assumption has seldom been enforced in pure form.
Studies routinely compare many language models by their fit to behavioral data and report the best-fitting configurations \citep[e.g.,][]{goodkind2018,wilcox2023,oh2023}, so that subsequent work, in adopting those models, implicitly conditions the choice of $\pLM$ on the very data the theory is meant to explain; hyperparameter and checkpoint selection introduce a logical dependence, albeit a mild one.
At the extreme, \citet{kiegeland2024} fine-tune a language model on reading times directly; I return to the status of this strategy in \cref{sec:tautology}.\footnote{In the same vein, \citet{yoshida2026} fine-tune neural language models on garden-path reading times, yielding an ``existence proof'' of a language model whose surprisal reproduces garden-path effects that off-the-shelf models notoriously underestimate---and they note that such constructibility may render surprisal theory ``unfalsifiable in practice.''
Their empirical construction is the practical counterpart of the formal one in \cref{sec:tautology-formal}; notably, however, their fine-tuned models still fail to reproduce memory-based effects such as the subject--object relative-clause asymmetry, showing that the family reachable by fine-tuning in practice is not yet universal---a contingent bound on flexibility that, as the formal argument shows, cannot be relied upon in principle.}\looseness=-1

On my reading of the literature, two decades of empirical work implicitly operated under the corpus assumption.
And, for most of that time, the results suggested that contextual surprisal derived from larger language models did in fact correlate more strongly with a variety of behavioral measurements, e.g., self-paced reading times \citep{smith2013,wilcox2023}, eye-tracking measures \citep{boston2008,demberg2008}, and neural signals \citep{frank2015,giulianelli2024} across languages and model architectures---from probabilistic context-free grammars to echo state networks \citep[\emph{cf.}][]{jaeger2001,frank2011}, long short-term memory networks \citep[\emph{cf.}][]{hochreiter1997,goodkind2018}, and large pretrained transformers \citep[\emph{cf.}][]{vaswani2017,wilcox2020}.
With each architectural shift, the cognitive commitments implicit in the choice of $\family$ became less transparent: a context-free grammar commits to a linguistically motivated $\family$ \citep{hale2001}, but for a neural language model the constraints on $\pLM$ are opaque---even though the family may be surprisingly restricted.\footnote{
\citet{yang2026} characterize the family that autoregressive transformer-based language models can compute.
Under fixed precision \citep{li2024}, it corresponds to a \emph{sub}regular family.
This implies that transformer-based language models correspond to very big probabilistic finite-state automata, which are less expressive than PCFGs in terms of the Chomsky hierarchy.
\citet{bergstrasser2025} show, moreover, that transformers are an inherently \emph{succinct} representation of such automata---they can encode PFSAs that would require exponentially more states if represented directly.}\looseness=-1

Recently, however, a growing body of evidence appears to falsify the scaling implication---and, with it, the corpus assumption specifically, not surprisal theory \emph{per se}.
\citet{oh2023} first reported that the correlation between surprisal and reading times improves as transformers grow larger, but only up to a point---beyond which it \emph{degrades} as models become better approximations to $\pData$.\footnote{\citet{kuribayashi2025} argue that this degradation is an artifact of evaluating only final-layer representations: middle layers of larger models retain strong alignment with human reading times. 
Relatedly, \citet{tsipidi2026} find that early-layer representations outperform surprisal itself in predicting early-pass reading measures across five languages. However, this does not rescue the scaling implication, which concerns the model's closeness to $\pData$.}
This finding has since been replicated and extended.
\citet{oh2024} confirmed the inverse relationship across four language model families and four reading corpora, arguing that word frequency explains much of the degradation.
\citet{lin2025} demonstrate that the same inverse trend appears for fMRI-based neural measures, extending the finding beyond latency-based reading times.
Finally, \citet{oh2025leakage} claim to have ruled out data leakage as an explanation, showing that the inverse relationship persists even with models trained on data from which reading-time corpora have been removed.
My argument relies on these findings being correct, but I acknowledge that the case is not fully closed: one might argue that the degradation reflects a mismatch between the corpora used to train current language models (predominantly web text) and the cognitively relevant distribution, rather than a failure of the corpus assumption \textit{per se}.\footnote{If one considers a corpus of conversational speech rather than web text, the scaling implication might still hold for models trained on that corpus.
Distinguishing ``no data-generating distribution $\pData$ yields the right $\pLM$'' from ``the right $\pData$ has simply not been identified'' is an open empirical question that the formal argument of this paper does not resolve.\looseness=-1}\looseness=-1

\section{Formal Preliminaries}\label{sec:preliminaries}\label{sec:background}

\paragraph{Alphabets, strings, and language models.}

An \defn{alphabet} $\alphabet$ is a finite, non-empty set whose elements I call \defn{units}---for instance, words, characters, or subword tokens.\footnote{See \citet{kiegeland2026units} for a careful treatment of the choice of unit inventory in surprisal-based analyses}\footnote{Although $\alphabet$ is finite, it can encode structured objects with unbounded complexity through serialization into strings. \citet{snaebjarnarson2026} show how infinite-state languages can be represented over a finite alphabet via such encodings.}
The \defn{Kleene closure} $\kleene{\alphabet}$ is the set of all finite strings over $\alphabet$.
I denote individual units by $\unit$ (or $\unit_t$ when indexed by position), strings by $\units = \unit_1 \cdots \unit_\strlen$ where $|\units| = \strlen$ denotes the length, and prefixes of a string $\units$ are denoted by $\units_{<t} = \unit_1 \cdots \unit_{t-1}$.
I augment $\alphabet$ with a distinguished end-of-sequence symbol $\eos \notin \alphabet$ and write $\eosalphabet \defeq \alphabet \cup \{\eos\}$ for $\eos$-augmented alphabet.
A \defn{language model} $\pGen$ is a probability distribution over $\kleene{\alphabet}$, i.e., $\sum_{\units \in \kleene{\alphabet}} \pGen(\units) = 1$.
I use $\pGen$ as a generic language model throughout this section; the specific language models $\pLM$, $\pHuman$, and $\pData$ introduced in \cref{sec:intro} are all instances of $\pGen$.
I say $\pGen$ has finite expected length if $\mathbb{E}_{\str \sim \pGen}[|\str|] < \infty$.
The \defn{prefix probability} of a string $\units$ under $\pGen$ is $\prefix{\pGen}(\units) \defeq \sum_{\units' \in \kleene{\alphabet}} \pGen(\units \units')$, i.e., the probability that a string sampled from $\pGen$ begins with $\units$.

\paragraph{From language models to conditionals.}
For each prefix $\units_{<t} \in \kleene{\alphabet}$, the language model induces a conditional distribution over the extended alphabet $\eosalphabet$, given by $\prefix{\pGen}(\unit_t \mid \units_{<t}) = \prefix{\pGen}(\units_{<t} \unit_t) / \prefix{\pGen}(\units_{<t})$ where $\unit_t \in \eosalphabet$; this is well-defined whenever $\prefix{\pGen}(\units_{<t}) > 0$, which I assume throughout.
For $\unit_t = \eos$, I adopt the convention $\prefix{\pGen}(\units_{<t}\,\eos) \defeq \pGen(\units_{<t})$---the probability that the string ends exactly at $\units_{<t}$---since the prefix probability is otherwise defined only for strings in $\kleene{\alphabet}$.
This conditional specifies the probability of each next unit $\unit_t \in \eosalphabet$ given the context---including the probability of ending the string by generating $\eos$.
Thus, every language model $\pGen$ decomposes as a product of these conditionals as
\begin{equation}\label{eq:autoregressive}
    \pGen(\units) = \prefix{\pGen}(\eos \mid \units) \prod_{t=1}^{\strlen} \prefix{\pGen}(\unit_t \mid \units_{<t}),
\end{equation}
where $\units = \unit_1 \cdots \unit_\strlen$ is a string of length $\strlen$.
I call this a language model's \defn{autoregressive factorization}.
Furthermore, \Cref{eq:autoregressive} allows one to view $\pGen$ as a collection of conditional distributions $\{\prefix{\pGen}(\cdot \mid \units)\}_{\units \in \kleene{\alphabet}}$, each over $\eosalphabet$.

\paragraph{From conditionals to language models.}
The converse direction is more delicate.
A \defn{conditional collection} is any family $\{\pPrefix(\cdot \mid \units)\}_{\units \in \kleene{\alphabet}}$ of distributions over $\eosalphabet$, one for each prefix, where $\pPrefix$ is not assumed to arise from a language model.
A conditional collection can be used to compute the probability of a string by plugging the appropriate conditionals into \Cref{eq:autoregressive}.
However, the resulting measure over $\kleene{\alphabet}$ may fail to sum to $1$: probability mass can ``leak'' onto infinite sequences that never produce $\eos$.
A conditional collection for which no such leakage occurs---i.e., for which $\sum_{\units \in \kleene{\alphabet}} \pGen(\units) = 1$---is called \defn{tight}.
A characterization of tightness is given by \citet[Theorem~4.7]{du2023}.

\section{Technical Conditions for the Tautology}\label{sec:tautology-formal}

In \Cref{sec:intro}, I argued that any non-negative difficulty measure can be converted into a language model whose surprisal reproduces it; see \Cref{eq:tautology-construction}.
I now make this precise.
Given a difficulty measure $\difficulty \colon \eosalphabet \times \kleene{\alphabet} \to \reals_{\geq 0}$, the construction in \Cref{eq:tautology-construction} defines a conditional collection $\qdiff$ over $\eosalphabet$:
\begin{equation}\label{eq:construction}
\!\qdiffprefix(\unit_t \mid \units_{<t}) \defeq \frac{\exp({-\difficulty(\unit_t \mid \units_{<t})})}{\sum_{\unit' \in \eosalphabet} \exp({-\difficulty(\unit' \mid \units_{<t})})}.
\end{equation}
The technical subtlety is that $\qdiff$ as defined in \Cref{eq:construction} is not guaranteed to be a language model, as discussed in \Cref{sec:preliminaries}.
One could translate \citeposs{du2023} exact characterization of the tightness of conditional collections directly into the notation of $\difficulty$, but the resulting expression is cumbersome.
Instead, I give a simpler sufficient condition purely in terms of $\difficulty$, which admits a more lucid discussion about cognitive plausibility.\looseness=-1
\begin{restatable}{proposition}{proptautology}\label{prop:tautology}
Let $\difficulty \colon \eosalphabet \times \kleene{\alphabet} \to \reals_{\geq 0}$ be a non-negative difficulty measure, and let $\qdiff$ be defined by \Cref{eq:construction}.
If there exists a function $f \colon \mathbb{Z}_{>0} \to \reals_{\geq 0}$ such that
\begin{equation}\label{eq:sufficient}
    \difficulty(\eos \mid \units) \leq f(|\units| + 1) \quad \text{for all } \units \in \kleene{\alphabet}
\end{equation}
and $\sum_{t=1}^{\infty} \exp({-f(t)}) = \infty$,
then \begin{enumerate*}[label=(\alph*)]
    \item $\qdiff$ is tight (see \cref{sec:preliminaries}), i.e., $\sum_{\units \in \kleene{\alphabet}} \qdiff(\units) = 1$; and
    \item if moreover $f$ is constant---i.e., there exists $C > 0$ such that $\difficulty(\eos \mid \units) \leq C$ for all $\units \in \kleene{\alphabet}$---then $\qdiff$ has finite expected length: $\mathbb{E}_{\qdiff}[|\str|] < \infty$.
\end{enumerate*}
\end{restatable}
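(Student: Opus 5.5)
The plan is to lower-bound the conditional probability of $\eos$ at every prefix and then control the mass that never halts. First, since $\difficulty \geq 0$, every term in the partition function satisfies $\exp(-\difficulty(\unit' \mid \units)) \leq 1$, so $Z(\units) \leq |\eosalphabet| = |\alphabet|+1 \eqdef K$. I will write $K$ inline rather than use an undefined macro; set $K \defeq |\alphabet|+1$. Combining this with \Cref{eq:sufficient} gives, for every prefix $\units$,
\begin{equation*}
\qdiffprefix(\eos \mid \units) \geq \frac{\exp(-f(|\units|+1))}{K}.
\end{equation*}
Call this lower bound $\epsilon_{|\units|+1}$. It depends only on the length of the prefix, which is the feature that makes the rest of the argument work.

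For (a), I will consider the probability $P_n$ that the autoregressive process defined by $\qdiff$ has not emitted $\eos$ within the first $n$ steps. This equals the total mass assigned to length-$n$ prefixes in $\alphabet^n$ under the product of conditionals. Conditioning on the prefix of length $t-1$, the chance of continuing past step $t$ is at most $1-\epsilon_t$, uniformly over prefixes of that length. Induction on $n$ then gives
\begin{equation*}
P_n \leq \prod_{t=1}^{n}(1-\epsilon_t) \leq \exp\Bigl(-\sum_{t=1}^n \epsilon_t\Bigr).
\end{equation*}
By hypothesis $\sum_t \epsilon_t = K^{-1}\sum_t \exp(-f(t)) = \infty$, so $P_n \to 0$. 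The total mass $\sum_{\units} \qdiff(\units)$ on finite strings equals $1 - \lim_n P_n$, because the events ``halt at step $t$'' are disjoint and their union is the complement of never halting. Hence $\qdiff$ is tight. Equivalently, one could invoke \citet[Theorem~4.7]{du2023}, since a divergent sum of lower bounds on the $\eos$ probability is exactly their sufficient condition. I will give the direct argument to keep the proof self-contained.

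For (b), if $f \equiv C$, then $\epsilon_t = \epsilon \defeq e^{-C}/K > 0$ for all $t$, and the bound becomes $P_n \leq (1-\epsilon)^n$. I will then use the tail-sum formula
\begin{equation*}
\mathbb{E}_{\qdiff}[|\str|] = \sum_{n\geq 0} \Pr(|\str| > n) = \sum_{n \geq 1} P_n \leq \sum_{n\geq 1}(1-\epsilon)^n = \frac{1-\epsilon}{\epsilon} < \infty.
\end{equation*}
Here $\Pr(|\str|>n)$ coincides with $P_{n+1}$ up to indexing, which makes no difference to finiteness.

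The only step needing care is making the induction for $P_n$ rigorous. The quantity $P_{n} = \sum_{\units \in \alphabet^{n}} \prod_{t} \qdiffprefix(\unit_t \mid \units_{<t})$ satisfies
\begin{equation*}
P_{n} = \sum_{\units \in \alphabet^{n-1}} \Bigl(\prod_{t} \qdiffprefix(\unit_t \mid \units_{<t})\Bigr)\bigl(1-\qdiffprefix(\eos\mid\units)\bigr),
\end{equation*}
and therefore $P_n \leq (1-\epsilon_n)P_{n-1}$. It also requires the identification of $1-\lim_n P_n$ with the mass on $\kleene{\alphabet}$. I expect this bookkeeping to be the main, though modest, obstacle. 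Everything else is the elementary inequality $1-x\leq e^{-x}$ and a geometric series.
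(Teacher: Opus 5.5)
Your proof is correct and follows the paper's argument: the same partition-function bound $\qdiffprefix(\eos \mid \units) \geq \exp(-f(|\units|+1))/|\eosalphabet|$, and the same geometric tail-sum for part (b), yielding the identical bound $(1-\epsilon)/\epsilon$. The only difference is that in part (a) you prove the tightness criterion yourself, via $P_n \leq \prod_{t \leq n}(1-\epsilon_t) \leq \exp(-\sum_{t \leq n}\epsilon_t)$ and telescoping, whereas the paper cites it as \citet[Proposition~4.3]{du2023}; note that this proposition, not Theorem~4.7 (the exact characterization), is the sufficient condition you mean.
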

\noindent The proof is deferred to \cref{app:proof-tautology}.

\paragraph{Cognitive plausibility.}
I now discuss the cognitive plausibility of part~(b) of \Cref{prop:tautology}, which is referenced as a technical condition in \cref{sec:tautology}.
It requires that $\difficulty(\eos \mid \units)$---the difficulty of wrapping up the sentence after a prefix $\units$---be uniformly bounded.
Reading times at sentence-final words do exhibit a well-documented wrap-up cost \citep{smith2013,meister2022}, but this cost is a fixed penalty associated with sentence-boundary integration and does not appear to scale with the length of the preceding context.
The assumption that $\difficulty$ is bounded---which I adopt throughout---therefore satisfies the condition for any realistic difficulty measure.\footnote{Boundedness of $\difficulty$ has a further payoff.
Namely, if $\difficulty(\unit \mid \units) \leq C$ for all $\unit$ and $\units$, then the constructed model satisfies $\qdiffprefix(\unit \mid \units) \geq \exp(-C)/|\eosalphabet|$, i.e., its per-unit probabilities are bounded away from zero---so $\qdiff$ itself meets the standing assumption placed on $\family$ in \Cref{def:metric}.}
Part~(b) is needed as a technical condition in \cref{sec:tautology}, where the scaling implication requires $\qdiff$ to have finite expected length.\looseness=-1

\section{The Scaling Implication}\label{sec:tautology}\label{sec:option1}\label{sec:option2}
The corpus assumption (\cref{sec:intro-propensity}) identifies the target language model with the data-generating distribution: $\pLM = \pData$.
A direct consequence is that one should choose as expressive a family $\family$ as possible and estimate $\pLM \in \family$ from the corpus by maximum likelihood---the richer $\family$ is, the better the estimator can approximate $\pData$, and the closer its surprisal values should be to the true surprisal under $\pData$.
In \cref{sec:intro-propensity}, I noted that \citet{oh2023} gave evidence against this prediction---but the argument relied on a formal result connecting the corpus assumption and the scaling implication.

\subsection{Measuring Psychometric Fitness}
To formalize the scaling implication, I require a measure of how well a language model's surprisal values predict observed processing difficulty.
In practice, researchers operationalize this fit by regressing behavioral measures (reading times, fixation durations, or ERP amplitudes) against surprisal values and reporting a goodness-of-fit statistic---typically delta log-likelihood \citep{goodkind2018,wilcox2023},  $R^2$ \citep{giulianelli2024} or mean-squared error \citep{kiegeland2024}.
I abstract over these choices with a general definition.\looseness=-1

\begin{definition}[Fitness measure]\label{def:metric}
Let $\pGen$ be a language model with finite expected length (\cref{sec:preliminaries}) and let $\scoring \colon \reals_{\geq 0} \times \reals_{\geq 0} \to \reals$ be a bounded, continuous function.\footnote{\label{fn:eps-floor}I assume throughout that per-unit probabilities are uniformly bounded away from zero: there exists $\varepsilon > 0$ such that $\prefix{\qLM}(\unit \mid \units) \geq \varepsilon$ for every $\qLM \in \family$, $\unit \in \eosalphabet$, and $\units \in \kleene{\alphabet}$; necessarily $\varepsilon \leq 1/|\eosalphabet|$, since the conditional probabilities sum to one.
Softmax-based models whose parameters range over a compact set satisfy this assumption.
Two consequences are used in what follows: first, surprisal values lie in the compact interval $[0, \log(1/\varepsilon)]$; second, by the geometric argument in the proof of \cref{prop:tautology}, part~(b), every $\qLM \in \family$ is tight with $\mathbb{E}_{\qLM}[|\str|] \leq (1-\varepsilon)/\varepsilon$.}
The $(\pGen, \scoring)$-\defn{fitness measure} is defined as\looseness=-1
\begin{align}\label{eq:metric}
    \metric(&\qLM) \defeq \\
    &\sum_{\unit \in \eosalphabet} \sum_{\units \in \kleene{\alphabet}} \prefix{\pGen}(\units\unit)\, \scoring\big(\difficulty(\unit \mid \units),\, -\log(\prefix{\qLM}(\unit \mid \units))\big), \notag
\end{align}
where $\prefix{\pGen}(\units\unit)$ is the prefix probability of $\units\unit$ under $\pGen$, as defined in \cref{sec:background}, with the convention $\prefix{\pGen}(\units\,\eos) \defeq \pGen(\units)$.
\end{definition}
For delta log-likelihood, $R^2$ and MSE, boundedness of $\scoring$ can be imposed without loss of generality due to  the fact that the per-unit probabilities are bounded away from zero. 
See footnote~\ref{fn:eps-floor}: together with the boundedness of $\difficulty$ (\cref{sec:tautology-formal}), it confines the arguments of $\scoring$ to the compact rectangle $[0, C] \times [0, \log(1/\varepsilon)]$, and $\metric$ depends on $\scoring$ only through its values there; any continuous $\scoring$ may therefore be truncated outside this rectangle without changing $\metric$, so the boundedness requirement sacrifices no generality.

\begin{restatable}[Continuity of $\metric$]{lemma}{lemcontinuity}\label{rem:continuity}
Suppose $\pGen$ has finite expected length and $\scoring$ is continuous and bounded.
Then $\metric \colon \family \to \reals$ is continuous.
\end{restatable}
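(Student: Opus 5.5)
The plan is to view $\metric(\qLM)$ as an integral of a bounded function against a finite measure on the countable index set $\eosalphabet \times \kleene{\alphabet}$, and to get continuity by a dominated-convergence (or tail-truncation) argument. First I fix the topology on $\family$. The natural choice is the topology of pointwise convergence of the conditionals: $\qLM_n \to \qLM$ iff $\prefix{\qLM}_n(\unit \mid \units) \to \prefix{\qLM}(\unit \mid \units)$ for every $(\unit, \units) \in \eosalphabet \times \kleene{\alphabet}$. This is the weakest reasonable choice, so continuity here implies continuity under any finer topology, e.g., uniform convergence of conditionals or convergence of parameters of a softmax model.

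Next I show the weights are summable. For every $\units \in \kleene{\alphabet}$, summing over the next unit gives $\sum_{\unit \in \eosalphabet} \prefix{\pGen}(\units\unit) = \prefix{\pGen}(\units)$. This uses the convention $\prefix{\pGen}(\units\,\eos) = \pGen(\units)$ and the definition of prefix probability. Hence the total weight is
\[
\sum_{\units \in \kleene{\alphabet}} \prefix{\pGen}(\units) = \sum_{\str} \pGen(\str)\,(|\str|+1) = \mathbb{E}_{\pGen}[|\str|] + 1 < \infty ,
\]
because a string $\str$ has exactly $|\str|+1$ prefixes. Writing $B \defeq \sup|\scoring|$, the series in \Cref{eq:metric} is therefore absolutely convergent and bounded by $B(\mathbb{E}_{\pGen}[|\str|]+1)$. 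In particular, $\metric$ is well defined on all of $\family$.

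Then I establish termwise continuity. By the standing assumption of footnote~\ref{fn:eps-floor}, every conditional lies in $[\varepsilon, 1]$. On this interval $x \mapsto -\log x$ is continuous, so $\qLM \mapsto -\log \prefix{\qLM}(\unit \mid \units)$ is continuous for each fixed pair. Composing with the continuous $\scoring$ (with its first argument $\difficulty(\unit \mid \units)$ held fixed) shows that each summand is continuous in $\qLM$.

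Finally I pass the limit through the sum. I expect this to be the only real obstacle, and only mildly so, because pointwise convergence of countably many terms does not by itself control the sum. For sequences, dominated convergence with dominating function $B\,\prefix{\pGen}(\units\unit)$ finishes the argument immediately. To avoid assuming that $\family$ is first countable, I will instead use an $\eta/3$ truncation, which works for nets. Given $\eta > 0$, choose a finite set $F \subset \eosalphabet \times \kleene{\alphabet}$ whose complement carries weight at most $\eta/(4B)$. The tail then contributes at most $\eta/2$ to $|\metric(\qLM') - \metric(\qLM)|$ uniformly in $\qLM'$. The finitely many terms in $F$ are jointly continuous, so they differ by less than $\eta/2$ on a neighborhood of $\qLM$. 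This yields continuity of $\metric$.
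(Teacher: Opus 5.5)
You prove continuity for a topology you chose yourself, namely pointwise convergence of the conditionals $\prefix{\qLM}(\unit \mid \units)$, and justify this by calling it the weakest reasonable choice. That leaves a gap at the first step. The paper places a different topology on $\family$ (\cref{sec:consistency}): pointwise convergence of \emph{string} probabilities, $\qLM_n(\units) \to \qLM(\units)$ for all $\units \in \kleene{\alphabet}$. This is the mode of convergence that \Cref{thm:consistency} delivers and that the proof of \Cref{prop:monotone} feeds into the lemma.

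Your reduction therefore needs the string topology to be finer than yours. That is, convergence of every $\qLM_n(\units)$ must force $\prefix{\qLM}_n(\unit \mid \units) \to \prefix{\qLM}(\unit \mid \units)$. This is not automatic. The prefix probability $\prefix{\qLM}(\units) = \sum_{\units'} \qLM(\units\units')$ is an infinite sum, so pointwise convergence of its terms says nothing about the sum if mass escapes onto ever-longer continuations. For instance, fix $a \in \alphabet$ and let $\qLM_n$ put mass $\tfrac12$ on the empty string and $\tfrac12$ on $a^n$. The conditional probability of $\eos$ at the empty prefix is $\tfrac12$ for every $n$, while the pointwise limit (a sub-probability measure) would assign it $1$.

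The missing idea is exactly the step the paper's proof is built around. Every member of $\family$, including the limit, is a probability distribution with total mass $1$. Scheff\'e's lemma therefore upgrades pointwise convergence to $\ell^1$ convergence, which gives $\prefix{\qLM}_n(\units) \to \prefix{\qLM}(\units)$ for every prefix. The $\varepsilon$-floor then gives $\prefix{\qLM}(\units) \geq \varepsilon^{|\units|} > 0$, so the ratios defining the conditionals converge. The converse direction is trivial, because each $\qLM(\units)$ is a finite product of conditionals. So on $\family$ the two topologies do coincide, but only once this argument is supplied.

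With that step added, the rest of your proof matches the paper's: the summability bound via $\mathbb{E}_{\pGen}[|\str|]+1$, termwise continuity, and domination by $\sup|\scoring| \cdot \prefix{\pGen}(\units\unit)$. Your net-based $\eta/3$ truncation is a correct and slightly more general substitute for the paper's sequential dominated convergence. It buys nothing here, though, since the pointwise topology on $\family$ is metrizable.
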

\noindent The proof is deferred to \cref{app:proof-continuity}.

\subsection{Estimation}
Given a $(\pData, \scoring)$-fitness measure $\metric$, one seeks the language model $\qLM^* = \operatorname*{arg\,max}_{\qLM \in \family} \metric(\qLM)$ that best predicts processing difficulty.\footnote{As discussed in \cref{sec:intro-propensity}, \citet{kiegeland2024} pursued exactly this strategy; as they themselves note, it raises the question of what, if anything, the optimized model explains.}
The choice of $\family$ is critical.
If $\family$ is the set of all language models---all distributions over $\kleene{\alphabet}$---then by \Cref{prop:tautology}, $\qLM^*$ can perfectly accommodate any observed difficulty pattern, and surprisal theory has no grounding.
The smaller $\family$ is, the more constrained and falsifiable surprisal theory becomes.
Under the corpus assumption, however, one need not justify $\family$ on cognitive grounds at all: the goal is simply to approximate $\pData$ as well as possible, so in principle one would choose the most expressive family available.\looseness=-1

Because one does not have access to $\pData$, a standard method for selecting $\qLM \in \family$ under the corpus assumption is \defn{maximum-likelihood estimation} (MLE).\footnote{In practice, neural language models are trained by stochastic gradient descent (SGD) on the cross-entropy loss, which is the empirical analogue of the KL objective but does not guarantee convergence to the global MLE.
The loss landscape of transformer language models is non-convex with many local minima and saddle points, and SGD may converge to a spurious local optimum rather than the global KL-minimizer.
The scaling implication, as stated, is therefore a claim about the \emph{population-level} optimum within $\family$---the best model that \emph{could} be found---not about the output of any particular training run.
Additional theory, perhaps drawing on the empirical observation that overparameterized networks tend to find good minima at scale, may be necessary to bridge this gap.} Given a corpus $\units^{(1)}, \ldots, \units^{(N)} \overset{\text{i.i.d.}}{\sim} \pData$ of $N$ strings, the MLE is
\begin{equation}
    \qLM_N \defeq \operatorname*{arg\,max}_{\qLM \in \family} \frac{1}{N} \sum_{n=1}^{N} \log\qLM(\units^{(n)}).
\end{equation}
Under standard regularity conditions---compactness of $\family$ and a uniform integrability condition on $\log \qLM$, both made precise in \Cref{thm:consistency} of \cref{sec:consistency}---the estimator converges almost surely to the KL projection of $\pData$ onto $\family$:
\begin{equation}\label{eq:kl-convergence}
    \KL{\pData}{\qLM_N} \xrightarrow[N \to \infty]{\text{a.s.}} \inf_{\qLM \in \family} \KL{\pData}{\qLM}.
\end{equation}
Let $\qLM^* \defeq \operatorname*{arg\,min}_{\qLM \in \family} \KL{\pData}{\qLM}$ denote the KL projection of $\pData$ onto $\family$.
When $\family$ is well-specified---i.e., $\pData \in \family$---the right-hand side of \Cref{eq:kl-convergence} is zero and $\qLM^* = \pData$.
Under misspecification, $\qLM^*$ is the member of $\family$ closest to $\pData$ in KL divergence.
In either case, the following holds.\footnote{When identifiability fails, the conclusion of \Cref{prop:monotone} still holds provided $\metric$ takes the same value on all elements of $\family^*$, since every limit point of $\{\qLM_N\}$ lies in $\family^*$ by \Cref{thm:consistency}.}

\begin{restatable}[Scaling implication]{proposition}{propmonotone}\label{prop:monotone}
Assume $\family^*$ is a singleton $\{\qLM^*\}$, i.e., the KL-minimizer is unique.
Under the regularity conditions of \Cref{thm:consistency}, it holds that
\begin{equation}\label{eq:monotone}
    \metric(\qLM_N) \xrightarrow[N \to \infty]{~p~} \metric(\qLM^*).
\end{equation}
In particular, for any $\varepsilon > 0$ and $\delta > 0$, there exists $N_0$ such that for all $N \geq N_0$, it holds that
$\Pr\big(|\metric(\qLM_N) - \metric(\qLM^*)| < \varepsilon\big) > 1 - \delta$.
\end{restatable}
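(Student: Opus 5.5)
The plan is a continuous-mapping argument. I combine the consistency result (\Cref{thm:consistency}) with the continuity of $\metric$ (\Cref{rem:continuity}). The first step is to fix the topology on $\family$ under which both results are stated. \Cref{thm:consistency} assumes $\family$ is compact, so I take the same metric $\rho$ on $\family$ that is used there, for instance the product topology on the conditionals $\{\prefix{\qLM}(\cdot \mid \units)\}_{\units}$, metrized by a weighted sum over the countably many prefixes. \Cref{rem:continuity} must be read with respect to this same $\rho$; if it is not, I would show that convergence in one sense implies convergence in the other.

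Next I would derive almost sure convergence $\qLM_N \to \qLM^*$ in $\rho$. \Cref{thm:consistency} states that every limit point of $\{\qLM_N\}$ lies in $\family^*$, almost surely. Because $\family$ is compact, every subsequence of $\{\qLM_N\}$ has a further convergent subsequence. Its limit must lie in $\family^* = \{\qLM^*\}$. A sequence in a compact metric space whose every subsequence has a sub-subsequence converging to the same point converges to that point. Hence $\rho(\qLM_N, \qLM^*) \to 0$ on the almost-sure event.

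Then I apply \Cref{rem:continuity}. It requires $\pGen = \pData$ to have finite expected length and $\scoring$ to be bounded and continuous; both hold by \Cref{def:metric}. Continuity of $\metric$ at $\qLM^*$ then gives $\metric(\qLM_N) \to \metric(\qLM^*)$ on the same almost-sure event. Almost sure convergence implies convergence in probability, which is \Cref{eq:monotone}. The ``in particular'' clause is simply the definition of convergence in probability: for given $\varepsilon, \delta$, choose $N_0$ so that $\Pr(|\metric(\qLM_N)-\metric(\qLM^*)| \geq \varepsilon) < \delta$ for all $N \geq N_0$.

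The main obstacle is making sure the topologies match. \Cref{thm:consistency} may state convergence in terms of KL values, as in \Cref{eq:kl-convergence}, rather than in terms of parameters. In that case I would first prove that $\qLM \mapsto \KL{\pData}{\qLM}$ is lower semicontinuous on the compact set $\family$, using the $\varepsilon$-floor and the uniform integrability assumption. It follows that any limit point $\bar q$ of $\qLM_N$ satisfies $\KL{\pData}{\bar q} \le \liminf_N \KL{\pData}{\qLM_N} = \inf_{\family}\mathrm{KL}$, so $\bar q \in \family^*$. Measurability of $\qLM_N$, and hence of the event above, I would take as part of the standing regularity conditions.
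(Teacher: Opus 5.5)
Your proposal is correct and matches the paper's proof: \Cref{thm:consistency} gives $\qLM_N \to \qLM^*$ almost surely when $\family^*$ is a singleton, \Cref{rem:continuity} transfers this to $\metric(\qLM_N) \to \metric(\qLM^*)$, and almost-sure convergence yields convergence in probability. Your topology-matching and KL-semicontinuity contingencies are unnecessary, because the theorem and the lemma are stated in the same topology (pointwise convergence of string probabilities), and the theorem already asserts full-sequence pointwise convergence in the singleton case.
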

\noindent The proof is deferred to \cref{app:proof-monotone}.

\Cref{prop:monotone} establishes that fitness \emph{converges} to $\metric(\qLM^*)$ as the estimator improves---but it does not predict that fitness increases monotonically with more data or larger models, which is the informal reading of ``scaling.''
Thus, the convergence result itself is not what is falsified.
What the corpus assumption additionally predicts---and what \citeposs{oh2023} findings contradict---is that $\pData$ \emph{maximizes} $\metric$, i.e., that the true corpus distribution is the best predictor of processing difficulty.
If $\metric$ first increases and then decreases as models approach $\pData$, then an imperfect approximation to $\pData$ is a better predictor than $\pData$ itself---a reversal that misspecification alone cannot produce.\footnote{Strictly, this inference requires that the largest models approach $\pData$ itself, not merely the KL projection $\qLM^* \neq \pData$: under misspecification, a rise-and-fall of $\metric$ along the path to $\qLM^*$ is compatible with $\pData$ maximizing $\metric$.
The expressiveness of modern architectures makes approximate well-specification plausible, but it is an assumption, not a theorem.}
This is evidence against the corpus assumption specifically, not against surprisal theory in its entirety.\looseness=-1

Even before this empirical falsification, the corpus assumption deserved more scrutiny: $\pData$ is determined by who wrote the texts, when, and for what purpose---a model estimated from Wall Street Journal prose \citep{marcus1993} converges to a different distribution than one estimated from Reddit \citep[\emph{cf.}][]{radford2019}, encoding corpus-specific statistics (genre, topic, editorial style) with no clear role in accounts of syntactic processing.
The grounding problem is not solved but deferred: from ``which $\pLM$?''\ to ``which corpus?''\looseness=-1

\section{The Search for a Rationalist Grounding}
\label{sec:rationalist}
If one accepts the corpus assumption as falsified and the early rationalist commitments
of \citet{hale2001} abandoned with the move to neural models, the question
becomes whether surprisal theory can be rescued by choosing $\family$ on
cognitive grounds.
What I call the \defn{rationalist principle} is the commitment to grounding $\family$ and $\pLM$ in aspects of human cognition motivated independently of the behavioral data surprisal theory is meant to explain.\footnote{I use ``rationalist'' loosely, inspired by but not limited to the nativist tradition of \citet{chomsky1966}. The examples below include functionalist and usage-based considerations that sit outside the Chomskyan tradition. I retain the term for its contrast with the empiricist stance that corpus fit alone suffices.\looseness=-1}
The clearest examples are constraints on the comprehender's processing architecture---working memory capacity, incremental processing---established through non-reading-time paradigms.

\subsection{Lossy-Context Surprisal}
\citet{futrell2020} propose replacing the full context $\units_{<t}$ with a lossy compression $\memory(\units_{<t})$, where $\memory$ is a deletion noise process reflecting memory decay.
If $\memory$ is derived from independently motivated assumptions about human memory rather than fit to reading-time data, this move can return propensity to surprisal theory.
\citet{hahn2022} scale this framework by parameterizing $\memory$ as a resource-rational retention policy optimized to minimize expected surprisal under a fixed resource budget---a rational analysis of the comprehender's processing problem, not a fit to behavioral data.
They confirm the model's predictions in three experiments on center-embedded relative clauses, producing fine-grained predictions that differ sharply from standard surprisal and that are borne out empirically.
However, the grounding is partial: $\memory$ is derived from first principles, but the underlying $\pLM$ (GPT-2) is not.
A complementary route is to build the memory limitation into the architecture itself: \citet{timkey2023} construct a recurrent language model with a single self-attention head whose limited retrieval capacity mirrors cue-based retrieval theories of working memory, and show that it captures semantic and syntactic interference effects observed in human sentence processing---thereby grounding the choice of $\family$ itself, not merely $\memory$, in independently motivated memory constraints.
\citet{mccurdy2024} and \citet{devarda2024} extend this line of work, showing that cognitively justified context degradation improves alignment with human reading times.
The promise of lossy-context surprisal lies in the independent derivation of $\memory$: a cognitively motivated memory function can break the tautology in a way that an empirically optimized one---which would merely shift the circularity from $\pLM$ to $\memory$---cannot.\looseness=-1

\subsection{Developmentally Plausible Corpora}
\label{sec:small-data}
A different strategy---the BabyLM approach \citep{warstadt2023,hu2024,charpentier2025}---restricts training data to developmentally plausible corpora, e.g., drawing on the CHILDES corpus \citep{macwhinney2000}, thereby grounding the corpus choice in language acquisition research rather than reading-time correlations.
Two pitfalls remain.
First, with a developmentally realistic amount of input, the estimator lacks an asymptotic interpretation, and the resulting model is dominated by its inductive bias (architecture, optimizer, initialization).
Unless that bias is independently justified on cognitive grounds, the grounding problem reappears.\footnote{One might object that regularization implicitly constrains $\family$ in cognitively relevant ways. If regularization is what makes intermediate-sized models predict reading times better, that would constitute evidence for a simplicity constraint on $\pHuman$---a promising direction, but one requiring independent theoretical motivation.\looseness=-1}
Second, this approach conflates ``what children hear'' with ``how adult humans process language.''
One may be able to resolve this conflation satisfactorily---for instance, by arguing that adult processing reflects the statistics of childhood input---but the argument requires independent support, of which I am unaware.

\section{Conclusion}

For any pattern of processing difficulty, a language model can be found whose surprisal reproduces it (\Cref{prop:tautology}); without further constraint, surprisal theory is therefore unfalsifiable.
The corpus assumption once supplied the needed constraint, but the scaling implication it entails (\Cref{prop:monotone}) has been empirically undermined: the degradation that \citet{oh2023} document cannot arise from misspecification alone and constitutes evidence against the assumption itself.
What remains is the grounding problem: the language model cannot be left free, nor pinned down by the reading-time data it is meant to explain, but must be derived from independently motivated commitments about the comprehender---the same structural move that rescued ``survival of the fittest.''\looseness=-1

\section*{Limitations}
This paper is a conceptual and mathematical argument about the logical status of surprisal theory; it does not present new experiments or computational results.
Several limitations follow from this scope.
First, the argument that the corpus assumption has been falsified rests on the findings of \citet{oh2023}, replicated by \citet{oh2024} across multiple model families and corpora, and extended to neural measures by \citet{lin2025}. While I treat these findings as robust, future work may refine or qualify them---for instance, by disentangling the effects of model architecture from those of corpus fit.
Second, the sufficient condition in \Cref{prop:tautology} is mild and likely satisfied by any realistic difficulty measure, but I have not verified this empirically for specific psychometric datasets.
Third, the paper identifies the grounding problem but does not solve it: I sketch two partial approaches (lossy-context surprisal and developmentally plausible corpora) without providing a complete, independently justified specification of $\pLM$.
Fourth, \Cref{thm:consistency} defines the estimator $\qLM_N$ as the $\operatorname{arg\,max}$ of the empirical log-likelihood over $\family$, and the scaling implication inherits this idealization: the guarantee concerns the model a perfect optimizer \emph{would} find.
Perfect optimization is likely unattainable in practice---even training a three-node neural network to fit a dataset is NP-complete \citep{blum1992}---and gradient-based training offers no guarantee of reaching the global optimum; bridging this gap would require optimization-aware theory beyond the scope of this paper.
Finally, the analogy to the propensity interpretation of fitness in evolutionary biology is meant to be structurally illuminating, not to claim that the two cases are identical in all respects; the analogy's limits have not been fully explored.

\section*{Acknowledgements}
I thank Eleanor Chodroff, Francesco Ignazio Re, Andreas Opedal, Richard Futrell, Samuel Kiegeland, Taiga Someya, Timothy J. O'Donnell, Tim Vieira, Kate McCurdy, Thomas Hikaru Clark, and Zach Hopton for helpful discussion and feedback on earlier drafts of this paper.
All errors are surely my own.

\bibliography{custom}

\clearpage
\onecolumn
\appendix

\etocdepthtag.toc{appendix}
\etocsettagdepth{mainmatter}{none}
\etocsettagdepth{appendix}{subsection}
\etocsetnexttocdepth{subsection}
\etocsettocstyle
  {\section*{Contents of the Appendix}}
  {\vspace{0.5em}\hrule\vspace{1em}}
\tableofcontents
\clearpage

\section{Consistency of the MLE for Language Models}\label{sec:consistency}

This appendix gives a self-contained proof that the maximum-likelihood estimator converges to the KL projection of the data-generating distribution onto the model family.
A proof is included here, rather than merely cited, because the standard references \citep{white1982,white1989,vandervaart1998} do not explicitly verify the uniform domination condition for autoregressive language models over $\kleene{\alphabet}$---a condition that, as I show below, reduces to the assumption that $\pData$ has finite expected length.
The result below is stated entirely in function space: the family $\family$ is treated as a compact set of language models, with no reference to an underlying parameter space.
This formulation sidesteps identifiability issues (e.g., the permutation symmetries of neural networks) and applies uniformly to any compact model class.
Following \citet{white1989}, the theorem shows convergence to the \emph{set} of KL-minimizers, so no identification assumption is required---essential for neural language models, whose parameter symmetries (e.g., permutation of attention heads) make point identification impossible.

An important caveat: the theorem assumes that the MLE $\qLM_N$ is a \emph{global} maximizer of the empirical log-likelihood over $\family$.
In practice, neural language models are trained by stochastic gradient descent on a non-convex loss landscape, and the optimizer may converge to a local rather than global optimum (see footnote~13 in the main text).
The scaling implication, and hence the consistency result, is therefore a statement about the population-level optimum within $\family$---the best model that \emph{could} be found---not about the output of any particular training run.

Let $\alphabet$ be a finite alphabet with $|\alphabet| = V$, and let $\pData$ be a data-generating distribution over $\kleene{\alphabet}$.
Let $\family$ be a family of language models over $\kleene{\alphabet}$, equipped with the topology of \defn{pointwise convergence}: $\qLM_n \to \qLM$ in $\family$ if and only if $\qLM_n(\units) \to \qLM(\units)$ for every $\units \in \kleene{\alphabet}$.
Define the \defn{expected log-likelihood} as
\begin{equation}\label{eq:expected-ll}
    Q(\qLM) \defeq \mathbb{E}_{\str \sim \pData}\!\left[\log(\qLM(\str))\right],
\end{equation}
and note that $Q(\qLM) = -\KL{\pData}{\qLM} + \mathrm{const}$, so maximizing $Q$ is equivalent to minimizing KL divergence; the constant is $-\entropy(\pData)$, which is finite because finite expected length (assumption~(iii) below) implies $\entropy(\pData) \leq (\mathbb{E}_{\str \sim \pData}[|\str|] + 1)\log|\eosalphabet| < \infty$.
Given an i.i.d.\ corpus $\str^{(1)}, \ldots, \str^{(N)} \sim \pData$, the \defn{maximum-likelihood estimator} is $\qLM_N = \operatorname{arg\,max}_{\qLM \in \family}\, Q_N(\qLM)$, where $Q_N(\qLM) = \frac{1}{N}\sum_{i=1}^N \log(\qLM(\str^{(i)}))$.
Define the set of KL-minimizers as $\family^* \defeq \operatorname{arg\,min}_{\qLM \in \family} \KL{\pData}{\qLM}$.

\begin{theorem}[Consistency of the MLE]\label{thm:consistency}
Suppose the following three conditions hold:
\begin{enumerate}[label=(\roman*)]
    \item \textbf{Compactness.} $\family$ is compact in the pointwise topology.
    \item \textbf{Uniform domination.} There exists a $\pData$-integrable function $d \colon \kleene{\alphabet} \to \reals_{\geq 0}$ such that $\sup_{\qLM \in \family} |\log(\qLM(\units))| \leq d(\units)$ for all $\units \in \kleene{\alphabet}$.
    \item \textbf{Finite expected length.} $\mathbb{E}_{\str \sim \pData}[|\str|] < \infty$.
\end{enumerate}
Then every limit point of the MLE sequence $\{\qLM_N\}_{N=1}^\infty$ lies in $\family^*$.
In particular,
\begin{equation}
    \KL{\pData}{\qLM_N} \xrightarrow[N \to \infty]{\text{a.s.}} \inf_{\qLM \in \family} \KL{\pData}{\qLM}.
\end{equation}
If $\family^*$ is a singleton $\{\qLM^*\}$, then $\qLM_N \to \qLM^*$ pointwise a.s.
\end{theorem}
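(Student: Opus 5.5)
This is the classical Wald-type consistency argument, run in function space with a uniform law of large numbers. There are three steps: continuity of $Q$, a uniform strong law for $Q_N$, and an argmax/compactness argument.

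\emph{Step 1: continuity of $Q$ on $\family$.} If $\qLM_n\to\qLM$ pointwise, then $\log\qLM_n(\units)\to\log\qLM(\units)$ for each $\units$. Finiteness holds because (ii) gives $|\log\qLM(\units)|\le d(\units)<\infty$ whenever $\pData(\units)>0$. Moreover $|\log\qLM_n|\le d$ with $d$ $\pData$-integrable, so dominated convergence (a countable sum weighted by $\pData$) gives $Q(\qLM_n)\to Q(\qLM)$.

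\emph{Step 2: uniform strong law.} I will show $\sup_{\qLM\in\family}|Q_N(\qLM)-Q(\qLM)|\to 0$ a.s. via the standard bracketing argument (as in \citet{vandervaart1998} or \citet{white1989}).
\begin{itemize}
\item Fix $\qLM\in\family$ and $\eta>0$, and let $U_\eta(\qLM)$ be a shrinking sequence of pointwise neighbourhoods.
\item The functions $\units\mapsto\sup_{\qLM'\in U}\log\qLM'(\units)$ and $\inf_{\qLM'\in U}\log\qLM'(\units)$ are dominated by $d$. They converge pointwise to $\log\qLM(\units)$ as $U$ shrinks, by pointwise continuity.
\item By dominated convergence, their expectations come within $\eta$ of $Q(\qLM)$ for a small enough neighbourhood.
\item Compactness (i) gives a finite subcover.
\item Applying the ordinary SLLN to the finitely many sup/inf functions, each integrable by (ii), yields $\limsup_N\sup_{\family}|Q_N-Q|\le 2\eta$ a.s. Intersect over rational $\eta$.
\end{itemize}
Two technical points need care. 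First, measurability of the sups: the pointwise topology on distributions over the countable set $\kleene{\alphabet}$ is metrizable and $\family$ is compact, hence separable, so sups can be taken over a countable dense subset. Second, $\log\qLM'(\units)$ may equal $-\infty$ when $\pData(\units)=0$; such strings occur with probability zero and can be discarded.

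\emph{Step 3: argmax consistency.} Let $Q^*=\sup_{\family}Q$, which is attained on the nonempty compact set $\family^*$ by Steps 1 and (i). Then
\[
Q(\qLM_N)\ \ge\ Q_N(\qLM_N)-o(1)\ \ge\ Q_N(\qLM^*)-o(1)\ \ge\ Q^*-o(1) \quad\text{a.s.}
\]
Take any limit point $\bar\qLM$ of $\qLM_N$ along a subsequence. Continuity of $Q$ gives $Q(\bar\qLM)\ge Q^*$, so $\bar\qLM\in\family^*$. Since $Q=-\KL{\pData}{\cdot}-\entropy(\pData)$, and $\entropy(\pData)$ is finite by (iii) as noted above the theorem, the same chain yields $\KL{\pData}{\qLM_N}\to\inf_{\family}\KL{\pData}{\qLM}$ a.s. If $\family^*=\{\qLM^*\}$, every subsequence of $\qLM_N$ has a further subsequence converging to $\qLM^*$ by compactness, so $\qLM_N\to\qLM^*$ pointwise a.s.

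\textbf{Role of (iii).} Condition (iii) is used only to make the KL constant finite. Were the preamble's claim about (ii) to be checked, one would take $d(\units)=(|\units|+1)\log(1/\varepsilon)$ under the $\varepsilon$-floor of footnote~\ref{fn:eps-floor}, whose integrability is exactly (iii).

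The main obstacle is Step 2. Establishing the uniform law in the pointwise topology on an infinite-dimensional space requires the measurability/separability care described there, together with handling strings on which $\log\qLM$ may be $-\infty$ off the support of $\pData$.
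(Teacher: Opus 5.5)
Your proposal is correct and follows essentially the same Wald-type argument as the paper: continuity of $Q$ by dominated convergence under the envelope $d$, a uniform strong law over the compact metrizable family, and the limit-point argument for maximizers, with the singleton case handled the same way. The differences are minor: you prove the uniform law by the explicit finite-subcover bracketing argument, where the paper cites van der Vaart's Glivenko--Cantelli example (which is proved exactly that way), and you get the KL convergence directly from $Q(\qLM_N)\to\sup_{\family} Q$; your measurability and $-\infty$ caveats are harmless but unnecessary, since $\kleene{\alphabet}$ is countable and (ii) with a real-valued $d$ already forces $\qLM(\units)>0$ for every $\qLM\in\family$ and every $\units$.
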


\paragraph{Discussion of the assumptions.}
For parametric families---such as neural language models with bounded weights---compactness of the parameter space and continuity of the parameterization give compactness of $\family$ in the pointwise topology, provided all models in $\family$ are tight (i.e., are genuine language models rather than defective distributions that leak mass onto infinite sequences).
This proviso is non-trivial: even with a compact parameter space, a sequence of language models can converge pointwise to a sub-probability measure if the expected length diverges along the sequence.
Requiring that $\family$ be a closed subset of the simplex $\Delta(\kleene{\alphabet})$ rules this out.
Note that finiteness of each model's expected length does not suffice, since expected lengths may diverge along a sequence; a \emph{uniform} bound on expected length across $\family$, by contrast, does suffice, as it uniformly controls the tail mass by Markov's inequality.
For~(ii), if every $\qLM \in \family$ assigns each conditional probability at least $\varepsilon > 0$ (as softmax-based models on a compact parameter space do), then $|\log(\qLM(\units))| \leq (|\units|+1) \log(1/\varepsilon)$, and the domination condition reduces to (iii).
This is precisely the assumption of \Cref{def:metric} that per-unit probabilities are bounded away from zero; note that it also delivers the uniform expected-length bound $\mathbb{E}_{\qLM}[|\str|] \leq (1-\varepsilon)/\varepsilon$ across $\family$, which suffices for the closedness in the simplex discussed above.
The finite expected length assumption~(iii) plays a dual role: it ensures the domination condition~(ii) for the consistency theorem and the well-definedness of the fitness measure $\metric$ (\Cref{def:metric}), since both require that sums over prefixes weighted by $\pDataprefix$ converge.
This is the key assumption not accounted for by the standard references \citep{white1982,vandervaart1998}, which work in finite-dimensional parameter spaces and do not consider the structure of distributions over $\kleene{\alphabet}$.

\begin{proof}
The argument follows \citet{white1982,white1989} and proceeds in four steps.

\paragraph{Step 1: The expected log-likelihood $Q$ is well-defined and finite.}
For each $\qLM \in \family$, the integrand $|\log(\qLM(\str))|$ is bounded by the dominating function $d(\str)$ from assumption~(ii).
Since $d$ is $\pData$-integrable by assumption, $\mathbb{E}_{\str \sim \pData}[|\log(\qLM(\str))|] \leq \mathbb{E}_{\str \sim \pData}[d(\str)] < \infty$.
Therefore $Q(\qLM) = \mathbb{E}_{\str \sim \pData}[\log(\qLM(\str))]$ is well-defined and finite for every $\qLM \in \family$.

\paragraph{Step 2: $Q$ is continuous on $\family$.}
Let $\{\qLM_n\}_{n \geq 1}$ be a sequence in $\family$ with $\qLM_n \to \qLM$ pointwise.
Then for each $\str \in \kleene{\alphabet}$, $\log(\qLM_n(\str)) \to \log(\qLM(\str))$ by continuity of the logarithm.
By assumption~(ii), $|\log(\qLM_n(\str))| \leq d(\str)$ for all $n$ and all $\str$, where $d$ is $\pData$-integrable.
The dominated convergence theorem \citep{billingsley2012} then gives
$Q(\qLM_n) = \mathbb{E}_{\str \sim \pData}[\log(\qLM_n(\str))] \to \mathbb{E}_{\str \sim \pData}[\log(\qLM(\str))] = Q(\qLM)$,
so $Q$ is continuous on $\family$ in the pointwise topology.

\paragraph{Step 3: Uniform law of large numbers.}
The strong law of large numbers gives $Q_N(\qLM) \to Q(\qLM)$ a.s.\ for each fixed $\qLM \in \family$.
I upgrade this to \emph{uniform} convergence over $\family$.
The family of functions $\{\str \mapsto \log(\qLM(\str)) : \qLM \in \family\}$ is uniformly bounded by the $\pData$-integrable envelope $d$ (assumption~(ii)).
By assumption~(i), $\family$ is compact, and it is metrizable, since the pointwise topology on a countable product of intervals is metrizable; moreover, the map $\qLM \mapsto \log(\qLM(\str))$ is continuous for each fixed $\str$, as established in the proof of Step~2.
The family is therefore a $\pData$-Glivenko--Cantelli class \citep[Example~19.8]{vandervaart1998}, which yields
\begin{equation}
    \sup_{\qLM \in \family} |Q_N(\qLM) - Q(\qLM)| \xrightarrow[N \to \infty]{\text{a.s.}} 0.
\end{equation}

\paragraph{Step 4: Convergence of maximizers.}
Let $\qLM^{\dagger}$ be any limit point of the sequence $\{\qLM_N\}$.
By compactness~(i), at least one such limit point exists.
Passing to a subsequence $\{\qLM_{N_k}\}$ with $\qLM_{N_k} \to \qLM^{\dagger}$, the uniform convergence from Step~3 gives
\begin{equation}
    |Q_{N_k}(\qLM_{N_k}) - Q(\qLM^{\dagger})| \leq \underbrace{|Q_{N_k}(\qLM_{N_k}) - Q(\qLM_{N_k})|}_{\to\, 0 \text{ by Step 3}} + \underbrace{|Q(\qLM_{N_k}) - Q(\qLM^{\dagger})|}_{\to\, 0 \text{ by Step 2}} \to 0.
\end{equation}
Now, since $\qLM_{N_k}$ maximizes $Q_{N_k}$ over $\family$, for any $\qLM \in \family$, it holds that $Q_{N_k}(\qLM_{N_k}) \geq Q_{N_k}(\qLM)$.
Taking $k \to \infty$ and applying the uniform convergence of Step~3 to both sides, $Q(\qLM^{\dagger}) \geq Q(\qLM)$ for every $\qLM \in \family$.
Hence $\qLM^{\dagger} \in \operatorname{arg\,max}_{\qLM \in \family} Q(\qLM) = \family^*$.
Since $Q(\qLM) = -\KL{\pData}{\qLM} + \mathrm{const}$, this is equivalent to $\qLM^{\dagger} \in \operatorname{arg\,min}_{\qLM \in \family} \KL{\pData}{\qLM}$.
Because every limit point of $\{\qLM_N\}$ lies in $\family^*$, the KL divergence converges: $\KL{\pData}{\qLM_N} \to \inf_{\qLM \in \family} \KL{\pData}{\qLM}$ a.s.
If $\family^* = \{\qLM^*\}$ is a singleton, then every limit point equals $\qLM^*$, and by compactness the full sequence converges: $\qLM_N \to \qLM^*$ pointwise a.s.
\end{proof}

\section{Deferred Proofs}\label{app:proofs}

\subsection{Proof of \Cref{prop:tautology}}\label{app:proof-tautology}

\proptautology*

\begin{proof}
\textbf{Part (a).} Each factor in \Cref{eq:construction} defines a valid conditional distribution over $\eosalphabet$ (by the softmax normalization), so $\qdiff$ is a conditional collection in the sense of \cref{sec:preliminaries}.
I apply \citet[Proposition~4.3]{du2023}, which states that a conditional collection is tight if there exists a function $h \colon \mathbb{Z}_{>0} \to \reals_{\geq 0}$ satisfying (a)~$\qdiffprefix(\eos \mid \units) \geq h(|\units| + 1)$ for all $\units \in \kleene{\alphabet}$, and (b)~$\sum_{t=1}^{\infty} h(t) = \infty$.
I construct such an $h$.
Since $\difficulty \geq 0$, each term in the denominator of \Cref{eq:construction} satisfies $\exp(-\difficulty(\unit' \mid \units)) \leq 1$, so the denominator is at most $|\eosalphabet|$.
The numerator for $\eos$ is $\exp(-\difficulty(\eos \mid \units))$.
Therefore:
\begin{subequations}
\begin{align}
    \qdiffprefix(\eos \mid \units) &= \frac{\exp(-\difficulty(\eos \mid \units))}{\sum_{\unit' \in \eosalphabet} \exp(-\difficulty(\unit' \mid \units))} \\
    &\geq \frac{\exp(-\difficulty(\eos \mid \units))}{|\eosalphabet|}. \label{eq:eos-lower-bound}
\end{align}
\end{subequations}
By hypothesis, $\difficulty(\eos \mid \units) \leq f(|\units| + 1)$, so $\exp(-\difficulty(\eos \mid \units)) \geq \exp(-f(|\units| + 1))$.
Setting $h(t) \defeq \exp(-f(t)) / |\eosalphabet|$, condition~(a) is satisfied.
For condition~(b): $\sum_{t=1}^{\infty} h(t) = \frac{1}{|\eosalphabet|} \sum_{t=1}^{\infty} \exp(-f(t)) = \infty$ by hypothesis.
\textbf{Part (b).}
If $f$ is the constant function $f(t) = C$, then \Cref{eq:eos-lower-bound} gives $\qdiffprefix(\eos \mid \units) \geq \exp(-C) / |\eosalphabet|$ for all $\units$.
Setting $\rho \defeq 1 - \exp(-C)/|\eosalphabet| < 1$, the probability that a string sampled from $\qdiff$ has length at least $t$ is at most $\rho^t$.
Therefore, note $\mathbb{E}_{\qdiff}[|\str|] \leq \sum_{t=1}^{\infty} \rho^t = \rho/(1-\rho) < \infty$.
\end{proof}

\subsection{Proof of \Cref{rem:continuity}}\label{app:proof-continuity}

\lemcontinuity*

\begin{proof}
Suppose $\qLM_n \to \qLM$ pointwise, i.e., $\qLM_n(\units) \to \qLM(\units)$ for all $\units \in \kleene{\alphabet}$.
Because every member of $\family$ is a language model---a probability distribution over $\kleene{\alphabet}$ with total mass $1$---Scheff\'e's lemma \citep{scheffe1947} upgrades pointwise convergence to convergence in $L^1$, i.e., $\sum_{\units \in \kleene{\alphabet}} |\qLM_n(\units) - \qLM(\units)| \to 0$.
Each prefix probability is a sum of string probabilities, so $\prefix{\qLM}_n(\units) \to \prefix{\qLM}(\units)$ for every $\units \in \kleene{\alphabet}$; and, because per-unit probabilities are bounded away from zero (\Cref{def:metric}), the limiting denominators satisfy $\prefix{\qLM}(\units) > 0$.
Hence the conditionals converge: $\prefix{\qLM}_n(\unit \mid \units) \to \prefix{\qLM}(\unit \mid \units)$ for all $\unit \in \eosalphabet$ and $\units \in \kleene{\alphabet}$.
Because $\scoring$ is continuous, the integrand converges pointwise for each fixed $\unit \in \eosalphabet$ and $\units \in \kleene{\alphabet}$.
Let $C$ be a bound on $|\scoring|$, which exists by assumption.
Because the conditionals sum to one, $\sum_{\unit \in \eosalphabet} \prefix{\pGen}(\units\unit) = \prefix{\pGen}(\units)$, and, because $\pGen$ has finite expected length, $\sum_{\units \in \kleene{\alphabet}} \prefix{\pGen}(\units) = \mathbb{E}_{\str \sim \pGen}[|\str|] + 1 < \infty$ \citep[Proposition~1]{opedal2024}; hence the weights $\prefix{\pGen}(\units\unit)$ are summable and $C \cdot \prefix{\pGen}(\units\unit)$ is an integrable dominating function.
To see why this implies continuity, suppose $\qLM_n \to \qLM$ pointwise.
Then for each $\unit$ and $\units$, the integrand $\prefix{\pGen}(\units\unit)\, \scoring\big(\difficulty(\unit \mid \units),\, -\log(\prefix{\qLM}_n(\unit \mid \units))\big)$ converges to $\prefix{\pGen}(\units\unit)\, \scoring\big(\difficulty(\unit \mid \units),\, -\log(\prefix{\qLM}(\unit \mid \units))\big)$, and all terms are bounded in absolute value by $C \cdot \prefix{\pGen}(\units\unit)$, which is summable.
The dominated convergence theorem \citep{billingsley2012} then permits interchange of the limit and the sum, yielding $\metric(\qLM_n) \to \metric(\qLM)$---which is precisely continuity of $\metric$ in the pointwise topology.\looseness=-1
\end{proof}

\subsection{Proof of \Cref{prop:monotone}}\label{app:proof-monotone}

\propmonotone*

\begin{proof}
By \Cref{thm:consistency}, under identifiability $\qLM_N \xrightarrow[N \to \infty]{\text{a.s.}} \qLM^*$ pointwise, which implies $\qLM_N \xrightarrow[N \to \infty]{~p~} \qLM^*$.
By \Cref{rem:continuity}, $\metric$ is a continuous functional of $\qLM$.
The result follows from the continuous mapping theorem \citep[Theorem~2.3]{vandervaart1998}.\looseness=-1
\end{proof}

\end{document}